  \providecommand\BibTeX{{%
    \normalfont B\kern-0.5em{\scshape i\kern-0.25em b}\kern-0.8em\TeX}}}
\newcolumntype{L}[1]{>{\hsize=#1\hsize\raggedright\arraybackslash}X}%
\newcolumntype{R}[1]{>{\hsize=#1\hsize\raggedleft\arraybackslash}X}%
\newcolumntype{C}[1]{>{\hsize=#1\hsize\centering\arraybackslash}X}%
\newcommand{\ie}{\textit{i.e.},\xspace}
\newcommand{\mech}{\ensuremath{\mathcal{A}}\xspace}
\newtheorem{theorem}{Theorem}
\newtheorem{definition}{Definition}[section]
\algnewcommand\algorithmicinput{\textbf{Input:}}
\algnewcommand\Input{\item[\algorithmicinput]}
\algnewcommand\algorithmicoutput{\textbf{Output:}}
\algnewcommand\Output{\item[\algorithmicoutput]}
\algnewcommand\algorithmictier{\textbf{Role:}}
\algnewcommand\Tier{\item[\algorithmictier]}
\begin{document}
\fancyhead{}
\title{Towards Differentially Private Text Representations}

\author{Lingjuan Lyu}
\affiliation{
  \institution{National University of Singapore}
%  \country{Singapore}
}
\email{lyulj@comp.nus.edu.sg}

\author{Yitong Li}
\affiliation{
  \institution{The University of Melbourne}
%  \country{Australia}
}
\email{yitongl4@student.unimelb.edu.au}

\author{Xuanli He}
\affiliation{
  \institution{Monash University}
%  \country{Australia}
}
\email{xuanli.he1@monash.edu}

\author{Tong Xiao}
\affiliation{
  \institution{Northeastern University}
%  \country{China}
}
\email{xiaotong@mail.neu.edu.cn}

\begin{abstract}
Most deep learning frameworks require users to pool their local data or model updates to a trusted server to train or maintain a global model. 
The assumption of a trusted server who has access to user information is ill-suited in many applications. To tackle this problem, we develop a new deep learning framework under an untrusted server setting, which includes three modules: (1) embedding module, (2) randomization module, and (3) classifier module.
For the randomization module, we propose a novel local differentially private (LDP) protocol to reduce the impact of privacy parameter $\epsilon$ on accuracy, and provide enhanced flexibility in choosing randomization probabilities for LDP. 
Analysis and experiments show that our framework delivers comparable or even better performance than the non-private framework and existing LDP protocols, demonstrating the advantages of our LDP protocol.
\end{abstract}

\begin{CCSXML}
<ccs2012>
<concept>
<concept_id>10010147.10010178</concept_id>
<concept_desc>Computing methodologies~Artificial intelligence</concept_desc>
<concept_significance>500</concept_significance>
</concept>
<concept>
<concept_id>10010147.10010178.10010179</concept_id>
<concept_desc>Computing methodologies~Natural language processing</concept_desc>
<concept_significance>500</concept_significance>
</concept>
<concept>
<concept_id>10010147.10010257.10010293.10010294</concept_id>
<concept_desc>Computing methodologies~Neural networks</concept_desc>
<concept_significance>300</concept_significance>
</concept>
<concept>
<concept_id>10002978.10003022.10003028</concept_id>
<concept_desc>Security and privacy~Domain-specific security and privacy architectures</concept_desc>
<concept_significance>500</concept_significance>
</concept>
<concept>
<concept_id>10002978.10003029.10011150</concept_id>
<concept_desc>Security and privacy~Privacy protections</concept_desc>
<concept_significance>500</concept_significance>
</concept>
</ccs2012>
\end{CCSXML}

\ccsdesc[500]{Computing methodologies~Artificial intelligence}
\ccsdesc[500]{Computing methodologies~Natural language processing}
\ccsdesc[300]{Computing methodologies~Neural networks}
\ccsdesc[500]{Security and privacy~Domain-specific security and privacy architectures}
\ccsdesc[500]{Security and privacy~Privacy protections}

%\begin{CCSXML}
%<ccs2012>
%<concept>
%<concept_id>10010147.10010178</concept_id>
%<concept_desc>Computing methodologies~Artificial intelligence</concept_desc>
%<concept_significance>500</concept_significance>
%</concept>
%<concept>
%<concept_id>10010147.10010257.10010293.10010294</concept_id>
%<concept_desc>Computing methodologies~Neural networks</concept_desc>
%<concept_significance>300</concept_significance>
%</concept>
%<concept>
%<concept_id>10002978.10003022.10003028</concept_id>
%<concept_desc>Security and privacy~Domain-specific security and privacy architectures</concept_desc>
%<concept_significance>300</concept_significance>
%</concept>
%</ccs2012>  
%\end{CCSXML}
%
%\ccsdesc[500]{Computing methodologies~Artificial intelligence}
%\ccsdesc[300]{Security and privacy~Domain-specific security and privacy architectures}
%\ccsdesc[300]{Computing methodologies~Neural networks}

\keywords{Privacy-preserving; neural representations; natural language processing.}

\maketitle
%%%%%%%%%%%%%%%%%%%%%%%%%%%%%%%%%%%%%%%%%%%%%%
\section{Introduction}
\label{sec:introduction}
The proliferation of deep learning (DL) has led to notable success in natural language processing (NLP), meanwhile, a series of privacy and efficiency challenges arise~\cite{hovy2015user,li2018towards,coavoux2018privacy}.
In NLP tasks, the input text often provides sufficient clues to portray the authors, such as their genders, ages, and other important attributes. Concretely, sentiment analysis tasks often impose privacy-related implications on the authors whose text is used to train models, and user attributes can be easily detectable from online review data, as evidenced by~\cite{hovy2015user}. Private information can take the form of key phrases explicitly contained in the text. However, it can also be implicit~\cite{preoctiuc2015analysis}. For example, the input representation after the embedding layer, or the intermediate hidden representation may still carry sensitive information which can be exploited for adversarial usages. It has been justified that an attacker can recover private variables with higher than chance accuracy, using only the hidden representation~\cite{coavoux2018privacy,li2018towards}. Such attack would occur in scenarios where end users send their learned representations to the cloud for grammar correction, translation, or text analysis tasks~\cite{li2018towards}.

To protect privacy, previous efforts resorted to a trusted aggregator to ensure centralized DP (CDP)~\cite{abadi2016deep}.
On the other hand, when participants are reluctant to directly share their crowd-sourced data with the server, federated learning becomes a promising learning paradigm that pushes model training to the edge~\cite{mcmahan2016communication}.
However, running complex deep neural networks (DNNs) with millions of parameters comes with resource limitations and user experience penalties. Moreover, most federated learning frameworks still assume a trusted aggregator who can have access to local model parameters or gradients~\cite{mcmahan2016communication}. The recent work pointed out the limitation of the trusted server and the associated privacy issues~\cite{lyu2020towards,lyu2020threats}.
Without an untrusted server,~\citet{shokri2015privacy} proposed to blur local model gradients by adding noise using differential privacy.
However, their privacy bounds are given per-parameter, the gigantic amount of model parameters prevents their technique from providing a meaningful privacy guarantee. The other cryptograph-based methods can be resource-hungry or overly complex for users~\cite{bonawitz2017practical}. More recently,~\citet{li2018towards} and~\citet{coavoux2018privacy} proposed to train deep models with adversarial learning. However, both works provide only empirical %improvements in 
privacy, without any formal privacy guarantees.

To address the aforementioned problems, we are inspired to take a different approach by utilizing LDP. %To the best of our knowledge, this paper makes the first investigation on ensuring LDP of the extracted text representations. 
\textbf{Our contributions} include: 
\begin{itemize}
\item We are the first to train on differentially private crowd-sourced representations for NLP tasks. We propose a novel LDP protocol to preserve the privacy of the extracted representation from user inputs. It offers enhanced flexibility in choosing the randomization probabilities in LDP.
\item Experimental results on various NLP tasks show that our framework delivers comparable or even better performance than the non-private framework, and our LDP protocol demonstrates advantages over the existing LDP protocols.
\end{itemize}

\section{Preliminaries and Related work}
\label{sec:PRELIMINARIES}
\subsection{Local Differential Privacy (LDP)}
For the scenario where data are sourced from multiple individuals, while the server is untrusted, LDP~\cite{duchi2013local} is needed to enable data owners to perturb their private data before publication. LDP roots in randomized response~\cite{warner1965randomized}, and it has been deployed in many real-world applications such as Google's Chrome browser, Apple's iOS, and US Census Bureau. A formal definition of LDP is provided in Definition~\ref{def:ldp}.

\begin{definition}
\label{def:ldp}
A randomized algorithm \mech satisfies $\epsilon$-LDP if and only if for any two input tuples $v$ and $v'$, we have
\begin{eqnarray*}
\Pr\{\mech(v)=o\} &\leq& \exp(\epsilon) \cdot \Pr\{\mech(v')=o\} 
\end{eqnarray*}

for $\forall o \in Range(\mech)$, where $Range(\mech)$ denotes the set of all possible outputs of the algorithm \mech. 
\end{definition}

\begin{figure}[!htp]
\centering
        \includegraphics[width=7cm]{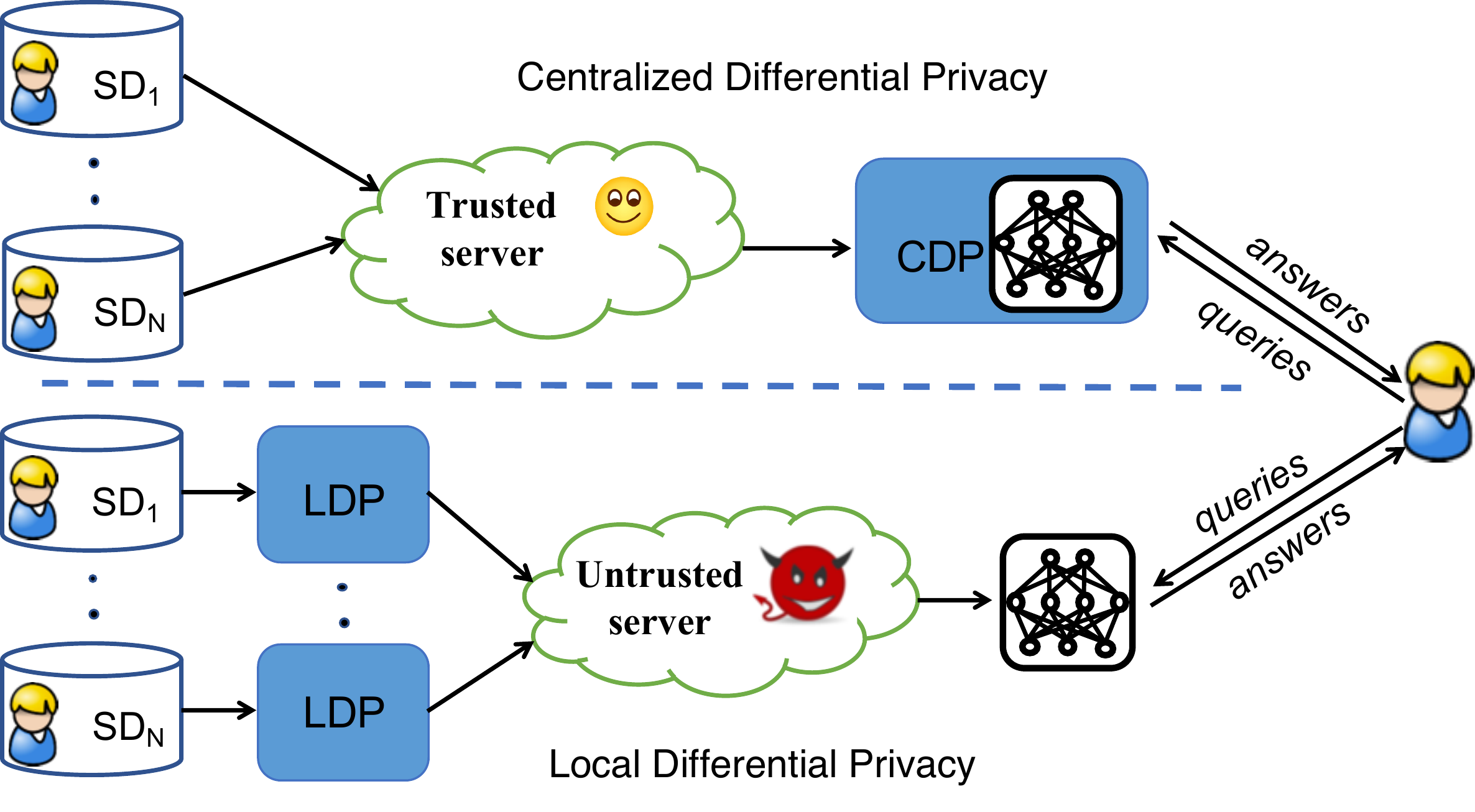}
        \caption{Deep Learning with CDP and LDP.}
\label{fig:LDPCDP}
\end{figure}

Compared to CDP~\cite{dwork2014algorithmic,abadi2016deep}, LDP offers a stronger level of protection. As illustrated in Figure~\ref{fig:LDPCDP}, in DL with CDP, the trusted server owns the data of all users~\cite{abadi2016deep}, and the server implements CDP algorithm before answering queries from end users. This approach can pose a privacy threat to data owners when the server is untrusted. 
Moreover, DL algorithms with CDP are inherently computationally complex~\cite{abadi2016deep}. By contrast, in DL with LDP, data owners are willing to contribute their data for social good, but do not fully trust the server, so it necessitates data perturbation before releasing it to the server for further learning.

\subsection{LDP Protocols}
\label{sec:ldp_protocols}
The most relevant LDP protocol is called Unary Encoding (UE), which consists of two steps~\cite{wang2017locally}: 

\textbf{Encoding}. Any single input $v$ is encoded into a $d$-bit vector ($d$ is domain size), where only the $v$-th bit equals to 1, \ie $\vec{B}$= Encode($v$), such that B[v]=1 and B[i]=0 for $i \neq v$. Hence each $d$-bit vector contains $d-1$ zeros and only 1 one, and the maximum difference between two adjacent binary vectors is 2, \ie sensitivity $\Delta f=2$. 

\textbf{Perturbing}. Each bit with value 1 is preserved with probability $p$, thus, Perturb($\vec{B}$) outputs $\vec{B}'$ as
\begin{equation}
    Pr[B'[i]=1]=
    \begin{cases}
      p, & \text{if}\ B[i]=1 \\
      q, & \text{if}\ B[i]=0
    \end{cases}
  \end{equation}
 
Here two key parameters in perturbation are $p=\Pr\{1 \rightarrow 1\}$, the probability that 1 remains 1 after perturbation, and $q=\Pr\{0 \rightarrow 1\}$, the probability that 0 is flipped to 1.

Depending on the choice of $p$ and $q$, UE based LDP protocols can be classified into~\cite{wang2017locally}:

\textbf{Symmetric Unary Encoding (SUE)}: 
SUE assumes the probability that a bit of 1 is preserved ($p$) equals the probability that a bit of 0 is preserved ($1-q$), \ie $p+q=1$, $p=\frac{e^{\epsilon/\Delta f}}{1+e^{\epsilon/\Delta f}}, q=\frac{1}{1+e^{\epsilon/\Delta f}}$.

\textbf{Optimized Unary Encoding (OUE)}: 
OUE optimizes SUE by using the optimized choices of $p, q$ for $\epsilon$-LDP. Setting $p$ and $q$ can be viewed as splitting 
$\epsilon$ into $\epsilon_1+\epsilon_2$ such that $\frac{p}{1-p}=e^{\epsilon_1}$ and $\frac{1-q}{q}=e^{\epsilon_2}$. That is, $\epsilon_1$and $\epsilon_2$ are the privacy budgets spent on transmitting 1's and 0's respectively. If there are more 0's than 1's in the encoded representation, it is reasonable to allocate as much privacy budget for transmitting the 0 bits as possible to maintain utility. In the extreme, setting $\epsilon_1=0$ and $\epsilon_2=\epsilon$ gives $p=\frac{1}{2}$ and $q=\frac{1}{1+e^{\epsilon/\Delta f}}$.

\section{Deep Learning with LDP}
\subsection{Optimized Multiple Encoding (OME)}
\label{sec:OME}
As both SUE and OUE are dependent on the domain size $d$, which may not scale well when $d$ is large. To remove the dependence on $d$, we propose a new LDP protocol called Optimized Multiple Encoding (OME). The key idea is to map each real value $v_i$ of the embedding vector into a binary vector with a fixed size $l$. Therefore, for the extracted embedding vector $\vec{v}=\{v_1,v_2,\cdots,v_r\}$ with $r$ elements, changing all elements of $\vec{v}$ results in $\Delta f=2r$ in both SUE and OUE, and $\Delta f=rl$ in OME. To enhance flexibility and utility in OME, we follow the intuition behind OUE~\cite{wang2017locally} to perturb 0 and 1 differently. 

In particular, we introduce a randomization factor $\lambda$ to adjust the randomization probabilities in OME. As implied in Theorem~\ref{theorem:MOME}, by increasing $\lambda$, we can decrease $q$, thus increasing the probability of keeping the original 0's. For the value of $p$, we increase the probability of preserving the original 1's for half of the bit vector while decreasing the corresponding probability for the other half. In this way, OME maintains both privacy and utility.

\begin{theorem}
\label{theorem:MOME}
For any inputs $v, v'$ and any encoded bit vector $B$ with sensitivity $rl$, OME provides $\epsilon$-LDP given 
\begin{align}\label{eq:MOME_p}
p = \Pr\{1 \rightarrow 1\} & = 
\begin{cases}
\frac{\lambda}{1+\lambda},\ $for$\ i \in 2n \\
\frac{1}{1+\lambda^3},\ $for$\ i \in 2n+1
\end{cases} \\
\label{eq:MOME_q}
q=\Pr\{0 \rightarrow 1\} & =\frac{1}{1+\lambda e^{\frac{\epsilon}{rl}}}
\end{align}
\end{theorem}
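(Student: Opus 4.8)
The plan is to verify Definition~\ref{def:ldp} directly, \ie to bound the likelihood ratio $\Pr\{\mech(v)=o\}/\Pr\{\mech(v')=o\}$ by $\exp(\epsilon)$ uniformly over all input pairs $v,v'$ and all outputs $o$. First I would pass from the inputs to their encoded bit vectors $B=\mathrm{Encode}(v)$ and $B'=\mathrm{Encode}(v')$, and exploit the fact that each of the $rl$ bits is perturbed independently. This factorizes the ratio into a product of per-bit ratios $\Pr\{B[i]\to o[i]\}/\Pr\{B'[i]\to o[i]\}$; every position where $B[i]=B'[i]$ contributes a factor of $1$, so only the at most $rl$ positions at which the two encodings disagree (the sensitivity) survive.

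Next I would reduce the surviving per-position factors to the three elementary quantities that the chosen parameters are engineered to control: from the stated values, $\frac{p}{1-p}=\lambda$ on the even bits and $\frac{p}{1-p}=\lambda^{-3}$ on the odd bits, while $\frac{1-q}{q}=\lambda\,e^{\epsilon/rl}$ for every bit. At a disagreeing position the worst-case output makes the factor equal to the unary-encoding ratio $\frac{p(1-q)}{(1-p)q}=\frac{p}{1-p}\cdot\frac{1-q}{q}$, exactly as in OUE. Substituting the three identities then gives a factor of $\lambda^{2}e^{\epsilon/rl}$ at an even position and $\lambda^{-2}e^{\epsilon/rl}$ at an odd position.

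The crux is a cancellation: pairing each even position with an odd one, the product of the two factors is $\lambda^{2}e^{\epsilon/rl}\cdot\lambda^{-2}e^{\epsilon/rl}=e^{2\epsilon/rl}$, so the randomization factor $\lambda$ disappears entirely and only the budget $e^{2\epsilon/rl}$ remains per pair. Because OME assigns the ``even'' rule to one half of the bits and the ``odd'' rule to the other half, the $rl$ perturbed bits split into $rl/2$ such even--odd pairs, and multiplying the per-pair bound yields $\left(e^{2\epsilon/rl}\right)^{rl/2}=e^{\epsilon}$, which is exactly the claim.

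I expect the main obstacle to be the bookkeeping that justifies this pairing in the worst case, rather than the algebra itself. Two points need care. First, one must argue that the output $o$ maximizing each per-position factor is indeed the OUE output, \ie that the extremal per-position ratio equals $\frac{p(1-q)}{(1-p)q}$ for both the even and the odd rule, which requires checking the relevant sign and monotonicity conditions on $p$ and $q$, together with $p,q\in[0,1]$ over the admissible range of $\lambda$. Second, and more delicate, one must ensure that the even and odd contributions genuinely pair up with cancelling $\lambda$ factors: if an adversarial pair of encodings could place all ``even'' bits in their ratio-maximizing direction while simultaneously placing all ``odd'' bits in theirs, the $\lambda$'s would reinforce rather than cancel and the bound would fail, so the argument must appeal to the structure of the OME encoding (the fixed split into halves and the matched flip pattern it induces) to exclude this configuration. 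Pinning down that structural constraint is where the real work lies; once it is in place, the claimed $\epsilon$-LDP follows from the cancellation above.
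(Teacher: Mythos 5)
Your route is the same as the paper's own proof --- the same per-bit factorization, the same per-half quantities $\lambda^{2}e^{\frac{\epsilon}{rl}}$ and $\lambda^{-2}e^{\frac{\epsilon}{rl}}$, and the same cancellation yielding $e^{\epsilon}$ --- but the step you yourself flag as ``where the real work lies'' is a genuine gap, and it cannot be closed. There is no structural constraint in the OME encoding that forces the disagreements between $\mathrm{Encode}(v)$ and $\mathrm{Encode}(v')$ to come in matched even--odd pairs with the orientations your cancellation requires: the encoding is simply the sign/integer/fraction binary expansion of each coordinate, so two inputs can produce bit vectors that differ in a \emph{single even-indexed} position (e.g.\ two reals differing in one fraction bit) and agree everywhere else. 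For such a pair, the output with bit $1$ at that position has likelihood ratio $p_1/q=\frac{\lambda}{1+\lambda}\bigl(1+\lambda e^{\frac{\epsilon}{rl}}\bigr)\approx\lambda e^{\frac{\epsilon}{rl}}$, which at the paper's own operating point ($\lambda=100$, $\epsilon=1$, $rl=500$) is about $100$, vastly exceeding $e^{\epsilon}\approx 2.7$; symmetrically, a single odd-indexed disagreement in the opposite direction gives $q/p_2=\frac{1+\lambda^{3}}{1+\lambda e^{\frac{\epsilon}{rl}}}\approx\lambda^{2}e^{-\frac{\epsilon}{rl}}$. An adversary can therefore make the $\lambda$ factors reinforce rather than cancel --- exactly the configuration you worried about --- and $\epsilon$-LDP fails for every $\lambda\neq 1$ (at $\lambda=1$ the per-bit ratio is bounded by $e^{\frac{\epsilon}{rl}}$ in every direction, and the claim does hold by the straightforward product bound over at most $rl$ disagreeing bits).

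Two further points. First, your assertion that the worst case at a single disagreeing position equals $\frac{p(1-q)}{(1-p)q}$ ``exactly as in OUE'' is not right: in OUE that quantity arises as the product over the \emph{two} positions at which one-hot encodings disagree, one flip in each direction, and it is precisely the one-hot structure guaranteeing this matched pair that OME lacks; per single position the extremal factor is $\max\{p/q,\ (1-q)/(1-p)\}$ or its mirror image, depending on the direction of the disagreement. Second, you should know that the paper's proof commits the very step you declined to justify: it bounds the even-half and odd-half products by $\bigl(\lambda^{2}e^{\frac{\epsilon}{rl}}\bigr)^{rl/2}$ and $\bigl(\lambda^{-2}e^{\frac{\epsilon}{rl}}\bigr)^{rl/2}$ with no argument that all $rl$ positions disagree, in the ``forward'' orientation, in a perfect even--odd matching. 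So your instinct at the end was the correct one; had you pushed on that structural question instead of deferring it, you would have arrived at a counterexample to the theorem (for $\lambda>1$) rather than a missing lemma, and your review of the parameter identities $\frac{p}{1-p}=\lambda$, $\frac{p}{1-p}=\lambda^{-3}$, $\frac{1-q}{q}=\lambda e^{\frac{\epsilon}{rl}}$ makes the failure mechanism transparent: the privacy of half the bits is being ``paid for'' with budget that only materializes if the other half cooperates, which no property of the encoding enforces.
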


\begin{proof}
Let $v$ and $\vec{B}$ represent an input and its encoded bit representation. Given that $\vec{B}$ has a sensitivity of $rl$, the privacy budget $\epsilon$ needs to be divided by the sensitivity for each bit. By setting 

\begin{equation*}
p = \Pr\{1 \rightarrow 1\} = 
\begin{cases}
\frac{\lambda}{1+\lambda},\ $for$\ i \in 2n \\
\frac{1}{1+\lambda^3},\ $for$\ i \in 2n+1
\end{cases}
q=\Pr\{0 \rightarrow 1\}=\frac{1}{1+\lambda e^{\frac{\epsilon}{rl}}}
\end{equation*}

\begin{equation*}
1-p = \Pr\{1 \rightarrow 0\} = 
\begin{cases}
\frac{1}{1+\lambda},\ $for$\ i \in 2n \\
\frac{\lambda^3}{1+\lambda^3},\ $for$\ i \in 2n+1
\end{cases}
1-q=\Pr\{0 \rightarrow 0\}=\frac{\lambda e^{\frac{\epsilon}{rl}}}{1+\lambda e^{\frac{\epsilon}{rl}}}
\end{equation*}

Then for any inputs $v, v'$, we have
\begin{equation*}
\begin{small}
\begin{aligned}
&\frac{\Pr\{\vec{B}|v\}}{\Pr\{\vec{B}|v'\}}= \frac{\prod_{i=1}^{rl} \Pr\{B[i]|v\}}{\prod_{i=1}^{rl} \Pr\{B[i]|v'\}}=\frac{\prod_{i \in 2n} \Pr\{B[i]|v\}}{\prod_{i \in 2n} \Pr\{B[i]|v'\}} \times \frac{\prod_{i \in 2n+1} \Pr\{B[i]|v\}}{\prod_{i \in 2n+1} \Pr\{B[i]|v'\}}\\
\leq& \left(\frac{\Pr\{1 \rightarrow 1\}}{\Pr\{1 \rightarrow 0\}} \times \frac{\Pr\{0 \rightarrow 0\}}{\Pr\{0 \rightarrow 1\}}\right)_{i \in 2n}^{\frac{rl}{2}}  \times \left(\frac{\Pr\{1 \rightarrow 1\}}{\Pr\{1 \rightarrow 0\}} \times \frac{\Pr\{0 \rightarrow 0\}}{\Pr\{0 \rightarrow 1\}}\right)_{i \in 2n+1}^{\frac{rl}{2}}\\
=&\left(\frac{\frac{\lambda}{1+\lambda}}{\frac{1}{1+\lambda}} \times \frac{\frac{\lambda e^{\frac{\epsilon}{rl}}}{1+\lambda e^{\frac{\epsilon}{rl}}}}{\frac{1}{1+\lambda e^{\frac{\epsilon}{rl}}}}\right)^{\frac{rl}{2}} \times \left(\frac{\frac{1}{1+\lambda^3}}{\frac{\lambda^3}{1+\lambda^3}} \times \frac{\frac{\lambda e^{\frac{\epsilon}{rl}}}{1+\lambda e^{\frac{\epsilon}{rl}}}}{\frac{1}{1+\lambda e^{\frac{\epsilon}{rl}}}}\right)^{\frac{rl}{2}}=e^\epsilon
\end{aligned}
\end{small}
\end{equation*}
\end{proof}

\subsection{Framework Realization}
As shown in Figure~\ref{fig:LDPDL}, the general setting for our proposed deep learning with LDP consists of three main modules: (1) embedding module outputs a 1-D real representation with length $r$; (2) randomization module produces local differentially private representation; and (3) classifier module trains on the randomized binary representations to generate a differentially private classifier as per the post-processing invariance of DP~\cite{dwork2014algorithmic}. The detailed training process is summarised in Algorithm~\ref{Algorithm:dl_ldp}.

\begin{figure}[!htp]
\centering
        \includegraphics[width=8.1cm]{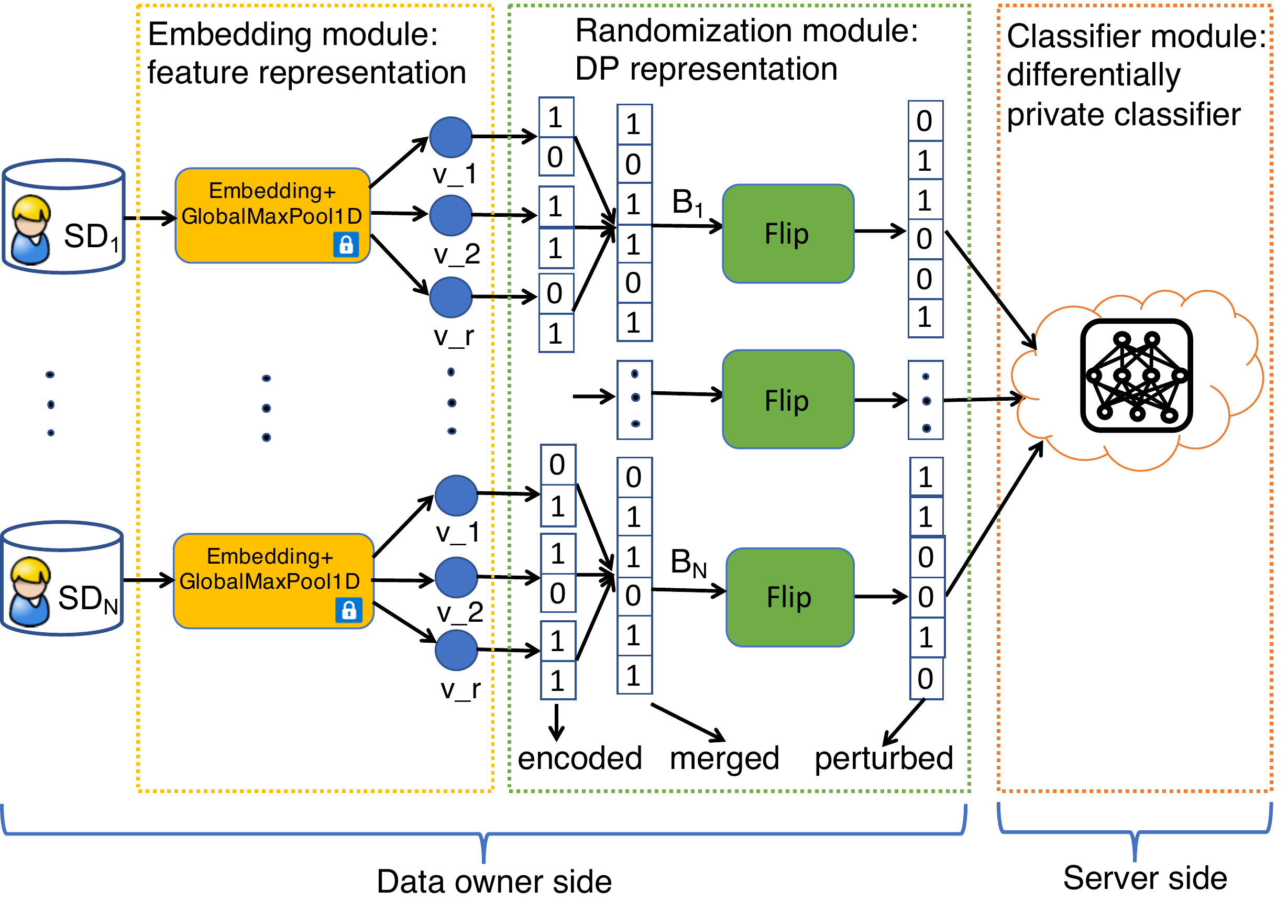}
        \caption{General setting for deep learning with LDP.
        }
\label{fig:LDPDL}
\end{figure}

%%%%%%%%%%%%%%%%%%%%%%%%%%%%%%%%%%%%%%%
\begin{algorithm}[htb]
\caption{Deep Learning with LDP}\label{Algorithm:dl_ldp}
\small
\begin{algorithmic}
  \State 1: Embedding: Each user maps its input into a 1-D embedding vector (with length $r$) using a pretrained embedding module (GloVe word embeddings, BERT, etc), z-score normalization is applied to avoid large values; 
  \State 2: Encoding: Each element $v_i \in \mathbb{R}^r$ of the normalized representation is mapped into a binary vector of size $l=1+m+n$. The first 1 bit represents the sign of the input (1 for negative and 0 for positive), the integer and fraction part are represented by the remaining $m$ and $n$ bits respectively;
  \State 3: Merging and perturbation: Each user merges all the $r$ binary vectors each with length $l$ into a long binary vector with total length $rl$, which is then randomized by using our proposed OME in Theorem~\ref{theorem:MOME}; 
  \State 4: Train target model: The server trains a differentially private classifier on all the received noisy representations.
\end{algorithmic}
\end{algorithm}
%%%%%%%%%%%%%%%%%%%%%%%%%%%%%%%%%%%%%%%

\section{Performance Evaluation}
\label{sec:Performance}
For performance evaluation, we focus on a range of NLP tasks: 1) sentiment analysis; 2) intent detection; and 3) paraphrase identification. 
In these tasks, the original sentences might carry some sensitive information such as name entities or monetary descriptions. These private information should be protected, meanwhile the performance for these tasks should not be heavily penalised.

\textbf{Datasets.} 
For sentiment analysis, we use three datasets: IMDb, Amazon, and Yelp, derived from~\cite{kotzias2015group}, where each review is labelled with a binary sentiment (positive vs. negative). For all sentiment datasets, we perform a train:test split into 8:2. 

Intent detection aims to classify each query into seven intents. We derive Intent dataset from~\cite{coucke2018snips}, which consists of 13,784 training examples and 700 test examples in total.

For paraphrase identification, we use Microsoft Research Paraphrase Corpus (MRPC) from \cite{dolan2005automatically}. This task decides whether given two sentences are semantically equivalent. 
Following \citet{wang2018glue}, we partition this data into train/test (3.7k/1.7k).

\textbf{Model and Training.} To extract the intermediate features, we use GloVe word embeddings with a dimension size of 50~\cite{pennington2014glove} for sentiment analysis, and use the pretrained BERT-base~\cite{devlin2018bert} for both Intent and MRPC.

For binary encoding of the extracted features, we use 10 bits (1 bit for the sign, 4 bits for the integer part, and 5 bits for the fraction part) to represent each element of the embedding vector.%, that is the total length of the final binary representation equals 10 times the embedding dimension. 

The classifier module is a multi-layer perceptron with 128 hidden units for sentiment analysis and 768 units for Intent and MRPC. For all datasets, we train the models for 50 epochs using SGD optimizer with learning rate 0.01, decay $10^{-6}$, momentum 0.9, and a batch size of 32, and apply a dropout rate of 0.5 on the representation. For each dataset, we average the results over 20 runs.

\textbf{Experimental Results.} We first compare our local differentially private NN (LDPNN) with the non-private NN (NPNN), where the randomization module is removed. Table~\ref{tbl:accuracy1} shows that our LDPNN delivers comparable or even better results than the NPNN across various privacy budgets $\epsilon$ when the randomization factor $\lambda \geq 50$. We hypothesise that LDP acts as a regularization technique to avoid overfitting. We conjecture another important reason is that the \emph{enlarged feature space} through encoding produces more powerful representation than the conventional 1-D output of the embedding layer. As LDPNN performance is directly related to the randomization probabilities $p$ and $q$, the higher $p$ and the lower $q$, the lower the randomization of the binary vector, and the better performance will be expected. When the embedding size $r$ and encoding size $l$ are fixed, $p$ is determined by the randomization factor $\lambda$, and $q$ is determined by both the privacy budget $\epsilon$ and randomization factor $\lambda$, as indicated in Equation~\ref{eq:MOME_p} and Equation~\ref{eq:MOME_q}. Hence we next investigate how $\epsilon$ and $\lambda$ impact the model accuracy.

\begin{table}[ht]
\caption{Accuracy of NPNN and LDPNN using OME.} 
\label{tbl:accuracy1}
\centering
\scalebox{0.9}{
\begin{tabular}{ccccccc}

\multirow{2}{*}{Parameter} & \multirow{2}{*}{Value} & \multicolumn{5}{c}{Accuracy [\%]}\\
\cmidrule{3-7}
   &  & IMDb & Amazon & Yelp & Intent & MRPC \\
\toprule
\multicolumn{2}{c}{NPNN} 
&70.67 &67.50&66.00 &94.17 &68.38\\ 
\midrule
 \multirow{4}{*}{$\epsilon$ $(\lambda=100)$} 
& 0.5  &65.33 &68.00&64.54 &91.28 &66.91\\ 
& 1  &67.33 &69.50&66.73 &91.35 &67.15 \\ 
& 5  &67.80 &71.00&67.50 &91.57 &70.10\\ 
& 10 &69.33 &72.50&68.10 &91.87 &70.15 \\ 
\midrule
 \multirow{4}{*}{$\lambda$ $(\epsilon=1)$} 
&$1$ &48.00 &45.50&42.50 &13.00 &64.95\\ 
&$10$ &64.00 &65.46&57.81 &85.43 &66.66\\ 
&$50$ &66.67 &69.21&66.50 &90.57 &66.91 \\ 
&$100$ &67.33 &69.50&66.73 &91.35 &67.15 \\ 
\bottomrule
\end{tabular}}
\end{table}

\textbf{Impact of $\epsilon$}. Contrary to the heuristic study in deep learning with CDP~\cite{abadi2016deep}, from Table~\ref{tbl:accuracy1}, we observe that accuracies are relatively stable when the privacy budget $\epsilon$ is changed within a wide range of values. The reason lies in the large sensitivity of the encoded binary representation. When $\lambda$ is a constant, the large sensitivity $rl$ in OME weakens the effect of $\epsilon$ on the randomization probabilities, as evidenced by Figure~\ref{fig:probabilities_eps_lambda} (left), $p=\{p_1, p_2\}$ and $q$ of OME keep nearly consistent when $\epsilon$ changes. This also explains %why we are able to offer a good 
high accuracy even under a very tight privacy budget (e.g. $\epsilon$=0.5). 

We also compare with the other two LDP protocols (SUE and OUE in Section~\ref{sec:ldp_protocols}) on sentiment analysis task. Table~\ref{tbl:accuracy2} shows that our OME significantly outperforms both SUE and OUE. The reason lies in the optimized randomization probabilities of OME, as shown in Figure~\ref{fig:probabilities_eps_lambda} (left), $p$ and $q$ in both SUE and OUE are fluctuating around 0.5, causing low accuracies.

\begin{table}[ht]
\caption{Comparison with other LDP protocols.} 
\label{tbl:accuracy2}
\centering
\begin{tabular}{lccc}

\multirow{2}{*}{LDP protocols ($\epsilon=1$)} & \multicolumn{3}{c}{Accuracy [\%]}\\
\cmidrule{2-4}
  & IMDb & Amazon & Yelp \\
\toprule
SUE  &55.33 &45.50 &48.00  \\ 
OUE  &50.67 &54.00 &51.00  \\ 
OME($\lambda=100$, ours)  &\bfseries 67.33 &\bfseries 69.50&\bfseries 66.73  \\ 
\bottomrule
\end{tabular}
\end{table}

\textbf{Impact of $\lambda$}. For randomization factor $\lambda$, according to Equation~\ref{eq:MOME_q}, without the randomization factor $\lambda$, \ie $\lambda=1$, lower $\epsilon$ values and higher $rl$ values will result in higher $q$ values, which may compromise utility. To alleviate this problem, OME calibrates the value of $\lambda$ to adjust randomization probabilities. As observed in Figure~\ref{fig:probabilities_eps_lambda} (right), with the increasing $\lambda$, OME can largely decrease $q$ -- the probability of perturbing 0 to 1. Although the probability $p_2$ of preserving the original 1's decreases for half of the bit vector, the corresponding probability $p_1$ increases for the other half. This partially explains why OME can maintain both privacy and utility.

\begin{figure}[!htp]
\centering
\resizebox{\columnwidth}{!}{% This file was created by tikzplotlib v0.9.1.
\begin{tikzpicture}

\definecolor{color0}{rgb}{0.501960784313725,0.501960784313725,0}
\definecolor{color1}{rgb}{0,0.75,0.75}
\definecolor{color2}{rgb}{0.75,0,0.75}
\definecolor{color3}{rgb}{0.75,0.75,0}

\begin{groupplot}[group style={group size=2 by 1}]
\nextgroupplot[
legend cell align={left},
legend style={legend style={font=\LARGE}, fill opacity=0.8, draw opacity=1, text opacity=1, at={(2.1,0.7)}, anchor=north east, draw=white!20!black},
tick align=outside,
tick pos=left,
label style={font=\huge},
tick label style={font=\LARGE},
x grid style={white!69.0196078431373!black},
xlabel={Privacy budget \(\displaystyle \epsilon\)},
xmin=-0.15, xmax=3.15,
xtick style={color=black},
grid=both,
xtick={0,1,2,3},
xticklabels={0.5,1,5,10},
y grid style={white!69.0196078431373!black},
ylabel={Randomization probabilities},
ymin=-0.0495039004960995, ymax=1.03960391039609,
ytick style={color=black},
ytick={0,0.1,0.2,0.3,0.4,0.5,0.6,0.7,0.8,0.9,1},
yticklabels={0.0,0.1,0.2,0.3,0.4,0.5,0.6,0.7,0.8,0.9,1.0}
]
\addplot [thick, red, mark=*, mark size=3, mark options={solid}]
table {%
0 0.99009900990099
1 0.99009900990099
2 0.99009900990099
3 0.99009900990099
};
\addlegendentry{$p_1$}
\addplot [thick, color0, mark=asterisk, mark size=3, mark options={solid}]
table {%
0 9.99999000001e-07
1 9.99999000001e-07
2 9.99999000001e-07
3 9.99999000001e-07
};
\addlegendentry{$p_2$}
\addplot [thick, green!50!black, mark=x, mark size=3, mark options={solid}]
table {%
0 0.00989208228618355
1 0.00988318240762078
2 0.00981226818303343
3 0.00972433348803029
};
\addlegendentry{$q$}
\addplot [thick, red, mark=triangle*, mark size=3, mark options={solid,rotate=180}]
table {%
0 0.50124999739584
1 0.502499979166875
2 0.51249739648421
3 0.52497918747894
};
\addlegendentry{$p_{OUE}$}
\addplot [thick, color1, mark=triangle*, mark size=3, mark options={solid}]
table {%
0 0.49875000260416
1 0.497500020833125
2 0.48750260351579
3 0.47502081252106
};
\addlegendentry{$q_{OUE}$}
\addplot [thick, color2, mark=triangle*, mark size=3, mark options={solid,rotate=270}]
table {%
0 0.5
1 0.5
2 0.5
3 0.5
};
\addlegendentry{$p_{SUE}$}
\addplot [thick, color3, mark=triangle*, mark size=3, mark options={solid,rotate=90}]
table {%
0 0.49875000260416
1 0.497500020833125
2 0.48750260351579
3 0.47502081252106
};
\addlegendentry{$q_{SUE}$}

\nextgroupplot[
legend cell align={left},
legend style={fill opacity=0.8, draw opacity=1, text opacity=1, at={(0.91,0.5)}, anchor=east, draw=white!80!black},
tick align=outside,
tick pos=left,
x grid style={white!69.0196078431373!black},
xlabel={Randomization factor \(\displaystyle \lambda\)},
xmin=-0.15, xmax=3.15,
xtick style={color=black},
xtick={0,1,2,3},
xticklabels={1,10,50,100},
y grid style={white!69.0196078431373!black},
ymin=-0.0495039004960995, ymax=1.03960391039609,
ytick style={color=black},
ytick={0,0.1,0.2,0.3,0.4,0.5,0.6,0.7,0.8,0.9,1},
yticklabels={,},
grid=both,
label style={font=\huge},
tick label style={font=\LARGE},
]
\addplot [thick, red, mark=*, mark size=3, mark options={solid}]
table {%
0 0.5
1 0.909090909090909
2 0.980392156862745
3 0.99009900990099
};
\addlegendentry{$p_1$}
\addplot [thick, color0, mark=asterisk, mark size=3, mark options={solid}]
table {%
0 0.5
1 0.000999000999000999
2 7.999936000512e-06
3 9.99999000001e-07
};
\addlegendentry{$p_2$}
\addplot [thick, green!50!black, mark=x, mark size=3, mark options={solid}]
table {%
0 0.499545454670674
1 0.0907589396730121
2 0.0195729220563089
3 0.00988318240762078
};
\addlegendentry{$q$}
\legend{};
\end{groupplot}

\end{tikzpicture}}
        \caption{Randomization probabilities change with: (left) $\epsilon$, (right) $\lambda$, where $p_1=\Pr\{1 \rightarrow 1\} \ $for$\ i \in 2n$, $p_2=\Pr\{1 \rightarrow 1\} \ $for$\ i \in 2n+1$, $q=\Pr\{0 \rightarrow 1\}$ in OME.}
\label{fig:probabilities_eps_lambda}
\end{figure}
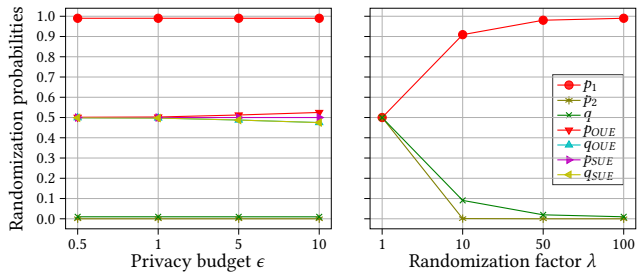

Overall, all these results show that our OME offers the reduced impact of the privacy budget $\epsilon$ on model accuracy, and significantly outperforms the most state-of-the-art LDP protocols. 

\section{CONCLUSION}
We formulated a new deep learning framework, which allows data owners to send differentially private representations for further learning on the untrusted servers. A novel LDP protocol was proposed to adjust the randomization probabilities of the binary representation while maintaining both high privacy and accuracy under various privacy budgets. Experimental results on a range of NLP tasks confirm the effectiveness and superiority of our framework.

\bibliographystyle{ACM-Reference-Format}
\bibliography{ldp_biblio}
\end{document}